\newcommand{\eps}{\epsilon}
\newcommand{\mc}{\mathcal}
\newcommand{\mb}{\mathbb}
\newcommand{\T}{\intercal}
\newtheorem{assumption}{Assumption}
\newtheorem{proposition}{Proposition}
\newtheorem{theorem}{Theorem}
\newtheorem{corollary}{Corollary}
\def\ddefloop#1{\ifx\ddefloop#1\else\ddef{#1}\expandafter\ddefloop\fi}
\def\ddef#1{\expandafter\def\csname bb#1\endcsname{\ensuremath{\mathbb{#1}}}}
\def\ddef#1{\expandafter\def\csname bf#1\endcsname{\ensuremath{\mathbf{#1}}}}
\def\ddef#1{\expandafter\def\csname bs#1\endcsname{\ensuremath{\boldsymbol{#1}}}}
\def\ddef#1{\expandafter\def\csname sf#1\endcsname{\ensuremath{\mathsf{#1}}}}
\def\ddef#1{\expandafter\def\csname c#1\endcsname{\ensuremath{\mathcal{#1}}}}
\DeclareMathOperator*{\argmin}{argmin}
\newcommand{\xnom}{x_{PR}}
\newcommand{\dxnom}{\dot x_{PR}}
\newcommand{\xpert}{x_{RR}}
\newcommand{\dxpert}{\dot x_{RR}}
\newcommand{\fnom}{f_{PR}}
\newcommand{\fpert}{f_{RR}}
\begin{document}

%

%

\twocolumn[

\aistatstitle{Approximate Regions of Attraction in Learning with Decision-Dependent Distributions}

\aistatsauthor{ Roy Dong \And Heling Zhang \And  Lillian J. Ratliff }

\aistatsaddress{ UIUC \And  UIUC \And UW } ]

\begin{abstract}
  As data-driven methods are deployed in real-world settings, the processes that generate the observed data will often react to the decisions of the learner. For example, a data source may have some incentive for the algorithm to provide a particular label (e.g. approve a bank loan), and manipulate their features accordingly. Work in strategic classification and decision-dependent distributions seeks to characterize the closed-loop behavior of deploying learning algorithms by explicitly considering the effect of the classifier on the underlying data distribution. More recently, works in performative prediction seek to classify the closed-loop behavior by considering general properties of the mapping from classifier to data distribution, rather than an explicit form. Building on this notion, we analyze repeated risk minimization as the perturbed trajectories of the gradient flows of performative risk minimization. We consider the case where there may be multiple local minimizers of performative risk, motivated by situations where the initial conditions may have significant impact on the long-term behavior of the system. We provide sufficient conditions to characterize the region of attraction for the various equilibria in this settings. Additionally, we introduce the notion of performative alignment, which provides a geometric condition on the convergence of repeated risk minimization to performative risk minimizers.
\end{abstract}


\section{INTRODUCTION}
\label{sec:intro}

Data-driven methods are growing increasingly popular in practice. Most classical machine learning and statistical methods view the underlying process which generates the data as fixed: the study is primarily focused on the mapping from data distributions to classifier. However, it is important to consider the effects in the other direction as well: how does the classifier chosen by a learner change the data distribution the learner sees? In particular, how do we close the loop around machine learning deployments in practice?

These closed loop effects can arise in many real world settings. One instance is strategic classification: whenever a data source has a stake in which label a classifier applies to it, they will seek cost-effective ways to manipulate their data to earn the desired label. For example, credit scoring classifiers are heavily guarded for fear of the potential for gaming~\citep{Hardt:2016we}. 
Alternatively, deployments of the classifier can both skew future datasets and also have causal influences over the real-world processes at play. For example, a classifier that predicts crime recidivism influences the opportunities available to individuals~\citep{Dressel:2018uf}.

Formally, we consider this problem in the framework introduced in~\citet{Perdomo:2020tz}. Let $\ell(z,x)$ denote the loss when the learner's decision is $x$ (e.g. $x$ can be the parameters of the chosen classifier) and the data has realized value $z$. Furthermore, let $\mc{D}(x)$ denote the data distribution when the learner's decision is $x$. In this framework, the performative risk is given by:
\begin{equation}
\label{eq:perf_risk}
PR(x) = \mb{E}_{Z \sim \mc{D}(x)} [\ell(Z,x)]
\end{equation}
Whereas classical machine learning results treat the distribution $Z \sim \mc{D}$ as fixed, the performative prediction framework models the decision-dependent distribution as a mapping $\mc{D}(\cdot)$. However, in many real world-deployments, this decision-dependent distribution shift may not be explicitly included in the learner's updates. This leads to algorithms based on inexact repeated minimization. Define the decoupled performative risk as:
\begin{equation}
\label{eq:decoupled_perf_risk}
R(x_1,x_2) = \mb{E}_{Z \sim \mc{D}(x_2)} [\ell(Z,x_1)]
\end{equation}
The decoupled performative risk $R(x_1,x_2)$ separates the two ways that the decision variable $x$ affects the performative risk. Through the $x_1$ argument, $x$ affects the classification error; through the $x_2$ argument, $x$ causes a decision-dependent distribution shift. 
Thus, when the decision-dependent distribution shift is not accounted for, the repeated gradient descent method yields the following update rule:
\begin{equation}
\label{eq:rep_motiv_eq}
x_{k+1} = x_k - \alpha_k (\nabla_{x_1} R(x_k,x_k) + \eta_k)
\end{equation}
Here, $(\eta_k)_k$ is some zero-mean noise process. 
Note that the gradient is evaluated only with respect to the first argument, i.e. the updates are based only on the effect of $x$ on the loss function, and ignore the distribution shift caused by $x$. In other words, the learner draws several observations from the distribution $\mc{D}(x_k)$, and, treating this distribution as fixed, updates their model parameters $x_{k+1}$ based on stochastic gradient descent: they are descending the gradient of the cost function $y \mapsto R(y,x_k)$.

In this paper, we shall analyze the steady-state behavior of the continuous-time flows corresponding to Equation~\eqref{eq:rep_motiv_eq}:
\begin{equation}
\label{eq:rgd_flow1}
\dot x = - \nabla_{x_1}R(x,x)
\end{equation}
The connections between the flow of Equation~\eqref{eq:rgd_flow1} and the repeated gradient descent method in Equation~\eqref{eq:rep_motiv_eq} can be drawn using results in stochastic approximation, i.e. the latter can be seen as a noisy forward Euler discretization of the former. For more details, we refer the reader to~\citet{Borkar:2008ts}.

In particular, we focus on settings where there may be multiple local equilibria, and classify their regions of attraction for these equilibria. In many settings of interest, there may be multiple steady-state outcomes, and it is of interest to determine which outcome will be chosen by the dynamics in Equation~\eqref{eq:rep_motiv_eq}. Our results allow us to characterize which regions of the parameter space will converge to which equilibria. 
We discuss this example in greater formal detail in Section~\ref{sec:example}.

Our main theoretical results can be informally summarized as follows. Theorem 1 states that trajectories of inexact repeated risk minimization will converge exponentially fast to a neighborhood of local performative risk minimizers, and stay in this neighborhood for all future time. It also provides a sufficient condition to under-approximate the regions of attraction for each local performative risk minimizer. In the special case of vanishing perturbations, these trajectories will converge to the minimizers themselves. As a corollary, this implies that performatively stable points will be near performatively optimal points, which was first observed in~\citet{Perdomo:2020tz} under a different set of conditions. 
We note that Theorem 1 requires conditions on the curvature of the performative risk: the sublevel sets $\{ x : PR(x) \le c \}$ must grow in a precise fashion, such that upper and lower bounds on the performative risk $PR(x)$ imply upper and lower bounds on the norm of the argument $x$.
Furthermore, the gradient of the performative risk must not vary too wildly around nearby points. This is formalized in Assumption~\ref{ass:exist_V}. 
Theorem 2 states a geometric condition on the performative perturbation which ensures that trajectories of repeated risk minimization will converge to local performative risk minimizers, intuitively based on the idea that the perturbation does not push against convergence. This result does not require the strong curvature assumptions of Theorem 1.

These results allow us to identify the regions of attraction for various steady-state outcomes. As observed in~\citet{Miller:2021te}, these various outcomes can be interpreted as different echo chambers: essentially the decision variable $x$ can act as a sort of self-fulfilling prophecy.\footnote{It is worth noting that we take a slightly different interpretation of an `echo chamber' in this paper. In~\citet{Miller:2021te}, the echo chambers are defined as performatively stable points. In this paper, we consider the regions near each locally performatively optimal point as an echo chamber. As we will discuss in Section~\ref{sec:example}, we are interested in settings where there may be many local performative risk minimizers that attract learning methods depending on initialization.} In settings with multiple echo chambers, we consider the question of which echo chamber will come to dominate, based on the initialization of the learner. 

The rest of the paper is organized as follows. In Section~\ref{sec:background}, we discuss the related literature. In Section~\ref{sec:model}, we introduce the problem statement and the mathematical concepts used for our results, and provide motivating examples in Section~\ref{sec:example}. In Section~\ref{sec:analysis_prm}, we analyze the gradient flow associated with performative risk minimization, and in Section~\ref{sec:analysis_RGD}, we analyze the flows associated with repeated risk minimization. We demonstrate numerical results in Appendix~\ref{sec:num_results}, and provide closing remarks in Section~\ref{sec:conclusion}.


\section{BACKGROUND}
\label{sec:background}

There has been a great deal of interest in studying decision-dependent distributions. In the context of operations research, this has been studied under either the name decision-dependent uncertainty or endogenous uncertainty. In~\citet{Jonsbraten:1998wc},~\citet{Jonsbraten:1998wk}, and~\citet{Goel:2004wb}, the authors considered oil field optimization, with a framework that captures how information revelation can be affected by one's decisions. In~\citet{Peeta:2010tb}, the authors consider infrastructure investment, and how investments can affect the future likelihood of disasters. For a taxonomy of the work in the operations research community, we refer the reader to~\citet{Hellemo:2018tf}.

Another form of decision-dependent distributions is strategic classification. In these works, the data source is seen as a utility-maximizing agent. The distribution shift resulting from the learner's decision is modeled by a best response function. In~\citet{Hardt:2016we} and~\citet{Bruckner:2011wy}, the authors formulate the problem as a Stackelberg game where the data source responds to the announced classifier. In~\citet{Dong:2018td}, the authors consider when the data source's preferences are hidden information and provide sufficient conditions for convexity of the overall strategic classification task. In~\citet{Akyol:2016wu}, the authors quantify the cost of strategic classification for the classifier. In~\citet{Milli:2019tf} and~\citet{Hu:2019wu}, the authors note that certain groups may be disproportionately affected as institutions incorporate methods to counter data sources gaming the classifier. In~\citet{Miller:2020vy}, the authors formulate strategic classification in a causal framework.

Most related to our work is recent efforts in performative prediction. This was introduced in~\citet{Perdomo:2020tz}. In this formulation, rather than explicitly modeling the form of the distribution shift, it proposes to analyze the decision-dependent distribution shift in terms of general properties of the $\mc{D}(\cdot)$ mapping, where $\mc{D}(x)$ is the distribution of the data when the learner's decision is $x$. In~\citet{Perdomo:2020tz}, the authors introduced the concepts related to performative prediction, demonstrated that neither the performatively stable nor performatively optimal points are subsets of each other, provided sufficient conditions for exact repeated risk minimization (defined as finding the exact minima with respect to $\mc{D}(x_k)$ at each time step) to converge, and provided conditions in which performatively stable points are near performatively optimal points. In~\citet{Mendler-Dunner:2020vd}, the authors analyze inexact repeated risk minimization (defined as an update step with respect to $\mc{D}(x_k)$ at each time step) from a stochastic optimization framework. In this paper, we build on the inexact repeated risk minimization framework. \citet{Miller:2021te} provided sufficient conditions for performative risk itself to be convex. \citet{Brown:2020wg} extended these results to settings where the distribution updates may have an internal state. 
In~\citet{Drusvyatskiy:2020wk}, the authors show that many inexact repeated risk minimization algorithms will also converge nicely, due to the way in which the performative perturbation decays near the solution. This shares many ideas with our work here, but we focus on the case where there may be multiple attractive equilibria, and generalize to settings where the perturbation itself may not vanish. 
In contrast to previous works which provide sufficient conditions to guarantee that an outcome is approached globally, we focus on understanding local regions of attraction for various outcomes.

This work draws on ideas from control theory; in particular, the analysis of gradient flows, Lyapunov functions, and perturbation analysis are the tools we use throughout. We refer the reader to~\citet{Hirsch:2012tx} and~\citet{Khalil:2001wj} as good references for these suite of tools.

Although our work still focuses on repeated risk minimization, it is worth noting that many other algorithms exist for learning with decision-dependent distributions. 
In \citet{Jag22}, the authors proposes the \textit{performative confidence bounds algorithm} which uses tools from the bandit literature to explore the distribution map and find a near-optimal solution. 
In \citet{Izzo21_1}, the authors proposes an algorithm called \textit{performative gradient descent} (PerfGD), which guarantees to find the performatively optimal point when $\cD(\cdot)$ satisfies certain parametric assumptions. Later in \citet{Izzo21_2}, the author extends the results to settings where the distribution updates have internal states. In the same year, \cite{Li21} presents state-dependent stochastic approximation (SA) algorithm that works in similar settings. Of course, this paragraph is not a exhaustive treatment of various algorithms in similar settings. 


\section{PERFORMATIVE PREDICTION, FLOWS, AND PERTURBATIONS}
\label{sec:model}

In this section, we introduce the mathematical concepts used throughout this paper. As previously mentioned, the framework used throughout this paper builds on the framework of performative prediction, introduced in~\citet{Perdomo:2020tz}.

In Section~\ref{sec:intro}, we have already defined the \textbf{performative risk} in Equation~\eqref{eq:perf_risk} and the \textbf{decoupled performative risk} in Equation~\eqref{eq:decoupled_perf_risk}. 
Furthermore, we say that $x$ is a \textbf{local performative risk minimizer} is $x$ is a local minima of $PR(\cdot)$. We say $x$ is \textbf{locally performatively stable} if $x$ is a local minima of $y \mapsto R(y,x)$. In general, neither imply the other~\citep{Perdomo:2020tz}.

Additionally, we consider the \textbf{performative risk minimizing (PRM) gradient flow}, defined by the following differential equation:
\begin{equation}\label{eq:prm_flow}
   \begin{aligned}
\dxnom &= - \nabla PR(\xnom) \\
&= - \nabla_{x_1} R(\xnom,\xnom) - \nabla_{x_2}R(\xnom,\xnom)\\
&=: \fnom(\xnom)
\end{aligned} 
\end{equation}
This vector field can be represented by the gradient of a function, which lends the flow to nice analysis. Under mild conditions, the trajectories of Equation~\eqref{eq:prm_flow} will converge to local minima of the performative risk.

However, as noted in Section~\ref{sec:intro}, many deployments of machine learning do not explicitly model the distribution shift, and, consequently, do not directly minimize the performative risk. We define the \textbf{repeated gradient descent (RGD) flow} as solutions to the differential equation:
\begin{equation}
\label{eq:RGD_flow}
\dxpert = -\nabla_{x_1}R(\xpert,\xpert) =: \fpert(\xpert)
\end{equation}
We define the \textbf{performative perturbation}:
\[
g(x) := \nabla_{x_2}R(x,x) = \fpert(x) - \fnom(x)
\]
In this paper, we view the PRM gradient flow as the \textit{nominal} dynamics, and the RGD flow as the \textit{perturbed} dynamics. The PRM gradient flow has nice properties arising from the fact it is a gradient flow, and, under certain conditions on the performative perturbation, we can prove properties about the RGD flow, which is the quantity of interest. In particular, we show ultimate bounds on the distance between the trajectories of RGD flow and the local performative risk minimizers. This also implies that under certain conditions on the performative risk, all performatively stable points are near performative risk minimizers, as was observed in~\citet{Perdomo:2020tz}.

Throughout this paper, we will be using tools from perturbation analysis in control theory. For a complete vector field $\dot x = f(x)$, let $\varphi_f(\cdot;x_0)$ denote the unique solution to the differential equation with initial condition $x(0) = x_0$. For a scalar-valued function $V$ and a vector field $f$, we can define the derivative along trajectories as $\mc{L}_fV(x) = \frac{\partial V}{\partial x}f(x)$. 
We say a point $x$ is an \textbf{equilibrium point} if $f(x) = 0$. An equilibrium point $x$ is \textbf{locally asymptotically stable} if there exists a neighborhood $U \ni x$ such that $\lim_{t \rightarrow \infty} \varphi_f(t;x') = x$ for all $x' \in U$. A set $A$ is \textbf{positively invariant} if for all $x_0 \in A$ and $t \ge 0$, we have $\varphi_f(t;x_0) \in A$. 
Additionally, given a set $A \subset \mb{R}^n$, we say two points $x$ and $y$ are \textbf{path-connected in $A$} if there exists a continuous function $\gamma : [0,1] \to A$ such that $\gamma(0) = x$ and $\gamma(1) = y$. This forms an equivalence relation defined on $A$, and each equivalence class is a \textbf{connected component of $A$}. Additionally, we will use $\mc{W}_1(\cdot)$ to denote the \textbf{Wasserstein distance}, also known as the earth mover's distance.


\subsection{EXAMPLES}
\label{sec:example}

Before we present our analysis of the PRM gradient flow and the RGD flow, we introduce some examples which motivate the study of performative risk in non-convex settings and multiple local equilibria. 

\subsubsection{Squared error loss and Bernoulli distributions}
\label{sec:simple_ex}

Consider the loss function $\ell(z,x) = \frac{1}{2} |z-x|^2$, where $x$ is a scalar. Furthermore, suppose that the decision-dependent distribution $\mc{D}(x)$ is simply $Z = 1$ with probability $p(x)$ and $Z = 0$ with probability $1 - p(x)$, for some function $p(\cdot)$. 
In this case, the decoupled performative risk is given by:
\begin{equation}
\label{eq:simple_dpr}
\begin{aligned}
R(x_1,x_2) &= p(x_2) \left[ \frac{1}{2} |1-x_1|^2 \right] + (1 - p(x_2)) \left[ \frac{1}{2} |x_1|^2 \right]\\
&= \frac{1}{2} [x_1^2 + p(x_2) (1 - 2x_1)]
\end{aligned}
\end{equation}
We will analyze this model in two ways. First, we will consider general $p(\cdot)$, and, fixing the loss function $\ell(\cdot)$, identify a class of decision-dependent distribution shifts $p(\cdot)$ which can still ensure convergence to performative risk minimizers, using Theorem~\ref{th:perf_align}. Second, we will consider a concrete example for $p(\cdot)$, and demonstrate how to apply Theorem~\ref{th:perturb1} to understand the regions of convergence.

As our concrete example of $p(\cdot)$, consider the following function as a candidate for $p(\cdot)$:
\begin{equation}
\label{eq:varphi_def}
\varphi(x) = 
\begin{cases}
\exp\left( 1 + \frac{-1}{1-(x-1)^2} \right) & \text{if } x \in (0,1) \\
0 & \text{if } x \le 0 \\
1 & \text{if } x \ge 1
\end{cases}
\end{equation}
This function is chosen because $\varphi(x) = 1$ for $x \ge 1$, $\varphi(x) = 0$ for $x \le 0$, and it is continuously differentiable. The derivative is:
\begin{equation}
\label{eq:dvarphi_def}
\varphi'(x) = 
\begin{cases}
\varphi(x) \frac{2(1-x)}{(1-(x-1)^2)^2} & \text{if } x \in (0,1) \\
0 & \text{otherwise}
\end{cases}
\end{equation}
$\varphi(\cdot)$ and its derivative is visualized in Figure~\ref{fig:varphi_plt}(a).  Since $p(0) = 0$ and $p(1) = 1$ for this choice of $p(\cdot)$, Equation~\eqref{eq:simple_dpr} directly implies that there are two performative risk minimizers: $x = 0$ and $x = 1$. Similarly, we can see that these points are performatively stable as well. 
The corresponding performative risk and gradients are visualized in Figure~\ref{fig:varphi_plt}(b)--(c).

\begin{figure*}[t!]
  \centering
\includegraphics[width=0.3\textwidth]{./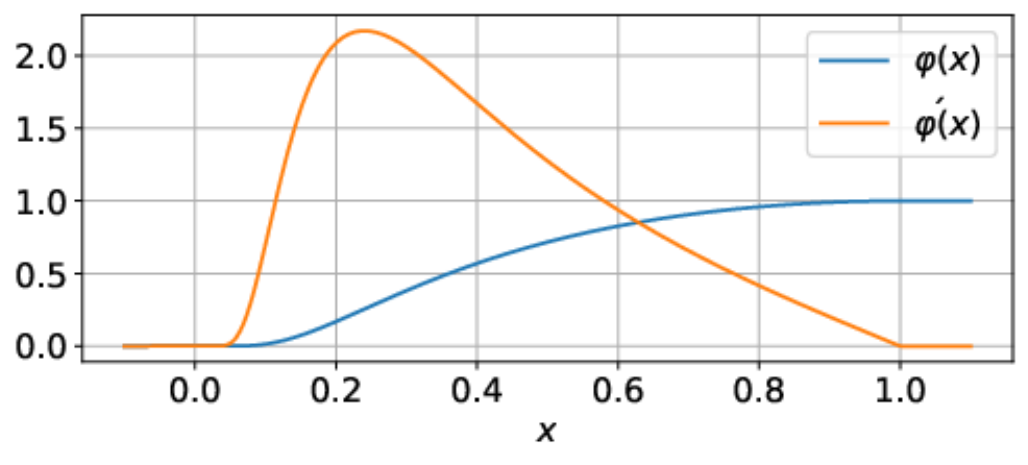}
\includegraphics[width=0.3\textwidth]{./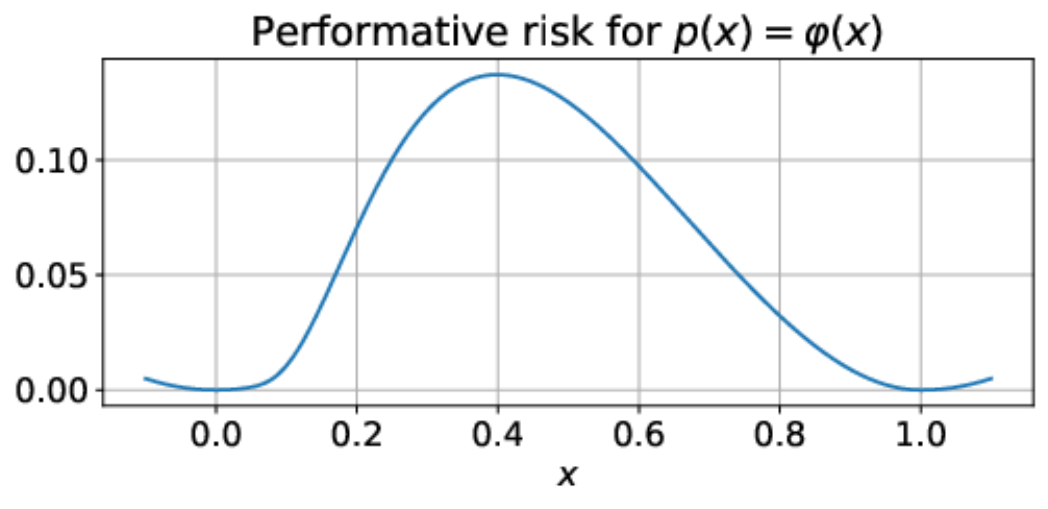}
\includegraphics[width=0.3\textwidth]{./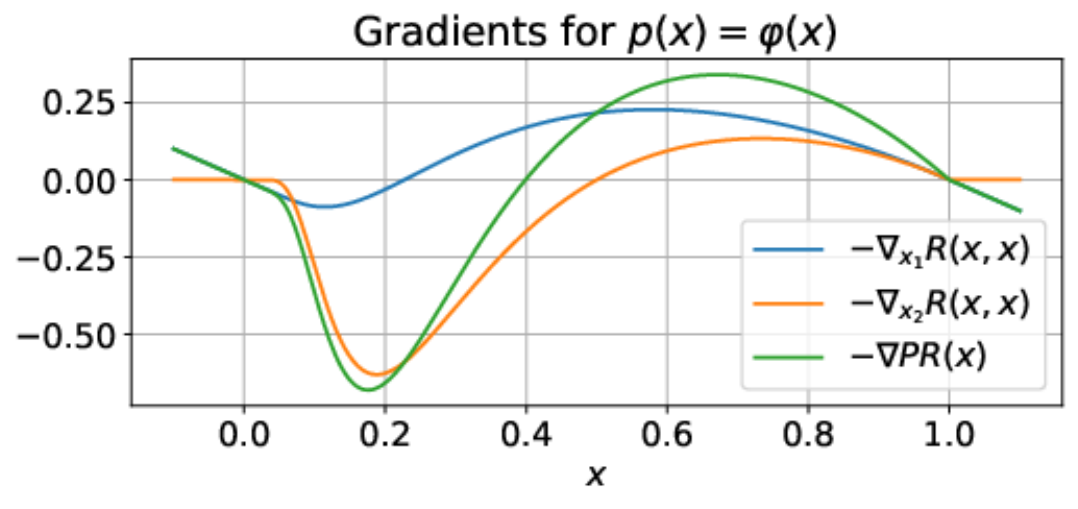}
  \caption{As an illustrative example, we consider a setting where with the squared error is used as the loss function, and the decision-dependent distribution shift modifies the parameters of a Bernoulli distribution, as discussed in Section~\ref{sec:simple_ex}. (a) A visualization of an example decision-dependent distribution shift $\varphi(x)$, as defined in Equation~\eqref{eq:varphi_def}, and its derivative, as derived in Equation~\eqref{eq:dvarphi_def}.  (b) The performative risk with $\Pr(Z = 1) = p(x) = \varphi(x)$. (c) The corresponding gradients for $p(x) = \varphi(x)$.}
  \label{fig:varphi_plt}
\end{figure*}

\subsubsection{Classification of adversarial agents}
\label{sec:adv_class}
As we've mentioned, when data-driven algorithms are deployed in real-world settings, it often caused a drift in the distribution of the data. One source of such drift is the behavior of adversarial agents. Here, we consider a simple classification problem of potentially adversarial agents.

Suppose that each agent are defined by a feature $z \in \bbR^d$ and a binary label $y \in \left\{ -1, 1 \right\}$, and
suppose that they are drawn i.i.d from a distribution $\cD_{Z, Y}$.
The task of the classifier is to correctly predict their
labels based on their features.
For simplicity, we further assume that
we are using a linear classifier defined by
\[
\begin{aligned}
    \hat{y}_{x}
    =
    sign(\langle x, z\rangle),
\end{aligned}
\]
where $x \in \bbR^d$ is the parameter of the classifier.
We also assume that the learner is using a logistic loss function:
\[
\begin{aligned}
    \ell(x, y, z)
    =
    \log(1 + \exp(-y\langle x, z\rangle)).
\end{aligned}
\]

So far, we've essentially described a ordinary binary classification problem with linear classifiers.
We further assume that once the classifier is deployed,
the agents will potentially alter their features to
induce false predictions.
Formally, we assume that if an agent is adversarial,
it will produce a fake feature $\hat{z}$ such that
\[
\begin{aligned}
    \hat{z}(x, y, z)
    \in
    \argmin_{z'}
    \left\{
        -y\langle x,z'\rangle
        +
        \frac{k}{2}
        \left\|z' - z\right\|_2^2
    \right\},
\end{aligned}
\]
and that agents are adversarial with probability
\[
\begin{aligned}
    p_{adv}(x, y, z)
    =
    e^{-\lambda_1 \left\|\hat{z}(x, y, z) - z\right\|_2^2}.
\end{aligned}
\]
Essentially, we are implicitly describing a `cost'
of adversarial behavior, which
that is proportional to the squared $l_2$ distance
between true and fake features. If the cost is too high,
an agent will be more likely to abandon adversarial
behavior, hence a lower $p_{adv}$.

This adversarial behavior can be thought of a distribution
shift caused by deploying our classifier, and the performative
risk is then given by
\begin{align*}
    PR(x)
    &=
    \bbE_{(\hat{Z}, Y) \sim \cD(x)}
    \left[ \ell(x, Y, \hat{Z}) \right] \\
    &=
    \bbE_{(Z, Y)\sim\cD_{Z, Y}}
    \big[
        \left.(1 - p_{adv}(x, Y, Z))
        \ell(x, Y, Z) \right.\\
        &\quad+
        p_{adv}(x, Y, Z)
        \ell(x, Y, \hat{Z}(x, Y, Z)
    \big].
\end{align*}

Our goal here is to find a classifier that minimizes
$PR(x)$ without knowing the behavior pattern of
the agents. That is, we are completely unaware of
the dependencies of the performative risk on $x_2$.

To illustrate, consider the simplest case where $d=1$ (i.e. $z, x \in \bbR$).
Specifically, let $\cD_{Z, Y} = \cN(y, 1)$.
The corresponding performative risk and gradients are
illustrated in Figure~\ref{fig:adv_class}(d) and (e).

\begin{figure*}[t!]
  \centering
\includegraphics[width=0.4\textwidth]{./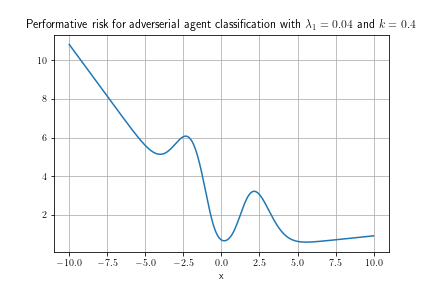}
\includegraphics[width=0.4\textwidth]{./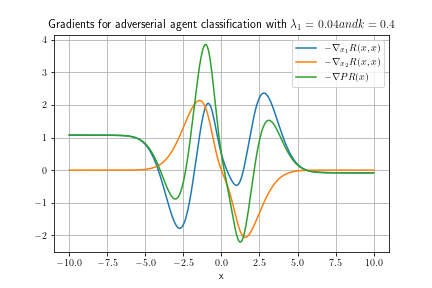}
  \caption{This is a 1-dimensional illustration of the setting described in 
  Section~\ref{sec:adv_class}. When $(z, y) \sim \cN(y, 1)$, $\lambda_1 = 0.04$ and
  $k = 0.4$, the corresponding performative risk is visualized in (d), and the
  corresponding gradients are visualized in (e).}
  \label{fig:adv_class}
\end{figure*}

In both these examples, there are multiple performative risk minimizers and performatively stable points. Performative risk minimization and repeated gradient descent can converge to different steady-state results, and it is of interest which initializations will converge to which equilibria under both dynamics. 
In the sequel, we shall demonstrate how different functions $p(\cdot)$ can lead to different steady-state outcomes, as well as how our theoretical results can provide conditions on $p(\cdot)$ such that we achieve convergence to performative risk minimizers, even when performing repeated approximate risk minimization.


\section{ANALYSIS OF PERFORMATIVE RISK MINIMIZING GRADIENT FLOW}
\label{sec:analysis_prm}

In this section, we consider PRM gradient flow, defined by Equation~\eqref{eq:prm_flow}. We observe that gradient flows provide complete vector fields, and that trajectories will converge to local performative risk minimizers under very mild conditions.

First, we state a proposition guaranteeing that flow is well-defined. The compact sublevel sets ensure that trajectories of Equation~\eqref{eq:prm_flow} remain bounded, which is sufficient to guarantee existence and uniqueness of solutions globally. For proof of the following proposition, we refer the reader to either~{\citet[Section 3.1]{Khalil:2001wj}} or~{\citet[Section 9.3]{Hirsch:2012tx}}.

\begin{proposition}[Existence and uniqueness of gradient flows]

Suppose the performative risk $PR(\cdot)$ is continuously differentiable, and its sublevel sets $\{ x : PR(x) \le c \}$ are compact for every $c \in \mb{R}$. Then for any initial condition $\xnom(0) = x_0$, there exists a unique solution to the differential equation in Equation~\eqref{eq:prm_flow}, defined for all $t \ge 0$.

\end{proposition}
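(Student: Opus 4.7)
The proof decomposes naturally into a local existence/uniqueness step followed by a global extension step that exploits the special Lyapunov structure of gradient flows.

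First, I would invoke the standard Picard--Lindel\"of (or Cauchy--Lipschitz) theorem for local existence and uniqueness. Under the hypothesis that $PR$ is continuously differentiable, the vector field $\fnom(x) = -\nabla PR(x)$ is continuous, and standard references (e.g., Khalil, Section 3.1) establish local existence and uniqueness on some maximal interval $[0,T_{\max})$ where $T_{\max} \in (0,\infty]$. The only real question is then whether $T_{\max} = \infty$, i.e., whether the solution can be extended globally.

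The key step is to exploit the fact that $PR$ itself is monotonically non-increasing along trajectories of the PRM flow. Specifically, along any solution $\xnom(\cdot)$ we compute
\begin{equation*}
\frac{d}{dt} PR(\xnom(t)) = \nabla PR(\xnom(t))^\T \dxnom(t) = -\|\nabla PR(\xnom(t))\|^2 \le 0,
\end{equation*}
so that $PR(\xnom(t)) \le PR(x_0)$ for every $t \in [0,T_{\max})$. Consequently, the entire trajectory is confined to the sublevel set $\{x : PR(x) \le PR(x_0)\}$, which by hypothesis is compact.

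With the trajectory confined to a compact set, the standard extension principle for ODEs applies: a solution with continuous right-hand side cannot escape in finite time while remaining in a compact subset of the domain, so $T_{\max} = \infty$. In particular, the solution is defined and unique on $[0,\infty)$, which is the claim.

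The only mild subtlety I anticipate is the gap between ``continuously differentiable'' and the local Lipschitz condition that Picard--Lindel\"of requires for uniqueness. Since the cited references treat gradient flows under precisely this hypothesis, I would simply appeal to them rather than re-derive uniqueness; the genuinely new content of the proof is the Lyapunov-plus-compact-sublevel-sets argument for global extension, which is entirely routine once the monotonicity of $PR$ along trajectories is noted.
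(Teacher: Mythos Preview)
Your proposal is correct and is precisely the standard argument. The paper does not actually supply its own proof of this proposition; it simply refers the reader to \citet[Section 3.1]{Khalil:2001wj} and \citet[Section 9.3]{Hirsch:2012tx}, and the text preceding the statement summarizes exactly the mechanism you describe: ``The compact sublevel sets ensure that trajectories of Equation~\eqref{eq:prm_flow} remain bounded, which is sufficient to guarantee existence and uniqueness of solutions globally.'' Your sketch---local existence, monotone decrease of $PR$ along trajectories, confinement to a compact sublevel set, and the extension principle---is the content of those references, so there is no meaningful difference in approach. The uniqueness subtlety you flag (that $PR \in C^1$ gives only a continuous, not necessarily locally Lipschitz, vector field) is real and is likewise not addressed by the paper, which simply defers to the cited texts.
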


Next, we note that gradient flows have nice properties from the perspective of optimization. Namely: every isolated local minima is locally asymptotically stable, and we can provide sufficient conditions to characterize a subset of the region of convergence.

\begin{proposition}[Convergence of gradient flows]

Suppose the performative risk $PR(\cdot)$ is twice continuously differentiable, and $x^*$ is an isolated local performative risk minimizer. Then $x^*$ is a locally asymptotically stable equilibrium of Equation~\eqref{eq:prm_flow}. 
Furthermore, take any $c$ such that $PR(x^*) \le c$. Let $A \subseteq \{ x : PR(x) \le c \}$ denote the connected component of $\{ x : PR(x) \le c \}$ that contains $x^*$. If $x^*$ is the only local performative minimizer in $A$, then all solutions with initial conditions in $A$ converge to $x^*$.

\end{proposition}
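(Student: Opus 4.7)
The plan is to use $V(x) := PR(x) - PR(x^*)$ as a Lyapunov function and invoke LaSalle's invariance principle. The key algebraic fact driving the argument is that along any trajectory of~\eqref{eq:prm_flow},
\[
\mc{L}_{\fnom} V(x) = \nabla PR(x)^\T \fnom(x) = -\|\nabla PR(x)\|^2 \le 0,
\]
with equality precisely at critical points of $PR$. So $V$ is monotonically nonincreasing along trajectories, strictly so away from the critical set.

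For the first claim, local asymptotic stability of $x^*$: since $x^*$ is an isolated local minimizer, $V$ is positive definite on some neighborhood $U$ of $x^*$, so Lyapunov's direct method immediately yields Lyapunov stability of $x^*$. To upgrade this to asymptotic stability, I would restrict to a small compact sublevel set $\{x : V(x) \le \delta\}$ whose connected component containing $x^*$ lies inside $U$, and then apply LaSalle: the $\omega$-limit set of any trajectory in this set is a compact, connected, invariant subset of the critical points of $PR$. Because $V$ is constant on any $\omega$-limit set, and $x^*$ is the unique point near itself with $V = 0$, for small enough $\delta$ the $\omega$-limit must collapse to the singleton $\{x^*\}$.

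For the region-of-attraction claim, the first step is to show $A$ is positively invariant. Since $PR$ is nonincreasing along $\xnom(\cdot)$, the trajectory remains in the sublevel set $\{PR \le c\}$; by continuity of $\xnom(\cdot)$ it cannot jump between connected components, so it stays inside $A$. Next, $A$ is compact as a closed subset of a compact sublevel set (inheriting the compactness hypothesis from the preceding existence-and-uniqueness proposition). With boundedness in hand, LaSalle's principle applies once more and gives that the $\omega$-limit set of every trajectory in $A$ is a compact, connected subset of the critical points of $PR$ inside $A$ on which $PR$ is constant.

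The main obstacle is closing the gap between ``$\omega$-limit lies in the critical set of $A$'' and ``$\omega$-limit $= \{x^*\}$.'' By hypothesis $x^*$ is the unique local minimizer in $A$, and an $\omega$-limit cannot consist of local maxima (that would force $PR$ to increase along trajectories). The subtle case is excluding trajectories that approach a saddle point in $A$, and the standard device is a nondegeneracy or strict-saddle condition ensuring that each such saddle has a lower-dimensional stable manifold and that the only compact, connected, constant-$V$ critical subset of $A$ that can serve as an $\omega$-limit is $\{x^*\}$ itself. This final step---reducing the full critical set inside $A$ to the single minimizer---is where I expect the argument to require the most care.
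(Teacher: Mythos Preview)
Your approach is essentially the same as the paper's: take $V(x)=PR(x)-PR(x^*)$ as a Lyapunov function and use that $\mc L_{\fnom}V(x)=-\|\nabla PR(x)\|^2\le 0$. There are two differences worth noting.

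First, for the local asymptotic stability claim, the paper is slightly more direct than you: it asserts (from ``isolated local minimizer'' plus $C^2$) that there is a punctured neighborhood on which $\nabla PR\neq 0$, so $\mc L_{\fnom}V<0$ strictly away from $x^*$, and then invokes the standard Lyapunov theorem rather than LaSalle. Your LaSalle route works too, but your sentence ``$x^*$ is the unique point near itself with $V=0$'' is not quite the right justification---the $\omega$-limit need not sit at the level $V=0$; what you actually need is that the critical set inside the small sublevel component reduces to $\{x^*\}$, which is exactly the ``isolated critical point'' fact the paper uses.

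Second, for the region-of-attraction claim, your treatment is in fact more careful than the paper's. The paper's formal proof only covers local asymptotic stability; the second statement is handled in the surrounding text by asserting that trajectories of a gradient flow ``must either converge to a local minima or go off to infinity,'' which is precisely the step you flag as delicate (ruling out convergence to saddles or non-minimizing critical sets in $A$). So the gap you identify is real, and the paper does not close it either---it simply takes that dichotomy for granted. If you want to close it rigorously you will indeed need an additional hypothesis (e.g.\ that $x^*$ is the unique critical point in $A$, or a strict-saddle/Morse-type condition plus a genericity argument); as stated, the proposition's hypotheses do not by themselves exclude saddles in $A$.
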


\begin{proof}
Since $x^*$ is an isolated local minimizer and the performative risk is twice continuously differentiable, there exists a neighborhood $U \ni x^*$ such that $\nabla PR(\cdot)$ is non-zero for all $x \neq x^*$. By continuity, there exists some constant $\eps$ such that the connected component of $\{ x : PR(x) \le PR(x^*) + \eps \}$ containing $x^*$ is contained in $U$. Since it is a sublevel set of $PR(\cdot)$ and $\mc{L}_{\fnom}PR(x) < 0$ on its boundary, it is positively invariant. Furthermore, since $\mc{L}_{\fnom}(x) < 0$ for all $x \neq x^*$ on this set, $x^*$ is locally asymptotically stable by standard Lyapunov arguments (see, e.g.~\citet[Section 4]{Khalil:2001wj}).
\end{proof}

The sublevel sets of the performative risk are positively invariant with respect to the PRM gradient flow. Furthermore, because of the continuity of trajectories, each connected component will also be positively invariant. This, in tandem with the fact that trajectories must either converge to a local minima or go off to infinity, also implies the previous proposition.

With minimal assumptions, isolated local performative risk minimizers are all locally attractive in the PRM gradient flow. In Section~\ref{sec:analysis_RGD}, we will view the PRM gradient flow as the nominal dynamics. From this perspective, we analyze the RGD flow as a perturbation from these nominal dynamics. To be able to do any perturbation-based analysis, we will need some stronger conditions on the convergence of the gradient flow associated with performative risk minimization. We note these assumptions here.

\begin{assumption}[Sufficient curvature of the PR]
\label{ass:exist_V}

Fix some isolated local performative risk minimizer $x^*$. 
We assume there exists positive constants $c_1$, $c_2$, $c_3$, $c_4$ and $\delta$ such that the following holds in a neighborhood of $x^*$:
\begin{equation}
\label{eq:v_ineq1}
c_1 |x-x^*|^2 \le PR(x) - PR(x^*) \le c_2 |x-x^*|^2
\end{equation}
\begin{equation}
\label{eq:v_ineq2}
c_3 |x - x^*| - \delta \le | \nabla PR(x) | \le c_4 |x - x^*| + \delta
\end{equation}

We will let $r$ denote the radius of this neighborhood, so the above inequalities are valid on the set $\{ x : |x - x^*| \le r \}$.

\end{assumption}
Assumption~\ref{ass:exist_V} provides conditions on which $V(x) = PR(x) - PR(x^*)$ can be used as a Lyapunov function locally. Next, we provide conditions directly on the loss $\ell(\cdot)$ and the decision-dependent distribution shift $\mc{D}(\cdot)$ which can ensure that Assumption 1 holds. First, we provide sufficient conditions for the bounds in Equation~\eqref{eq:v_ineq1}.

\begin{proposition}[Performative risk bounds]
\label{prop:pr_bnds}
Let $x^*$ be a performative risk minimizer and fix any $x$. If:
\begin{enumerate}
    \item $\ell(\cdot,x)$ is $L_1$ Lipschitz continuous
    \item $\mc{W}_1(\mc{D}(x),\mc{D}(x^*)) \le L_2 |x-x^*|^2$
    \item $\ell(z,\cdot)$ is $m$-strongly convex and $L_3$-smooth for every $z$
\end{enumerate}
Then: $(m/2 - L_1 L_2) |x-x^*|^2 \le PR(x) - PR(x^*) \le (L_1 L_2 + L_3/2) |x-x^*|^2$.
\end{proposition}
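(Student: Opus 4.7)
The plan is to introduce the decoupled risk as an intermediate point, writing
\[ PR(x) - PR(x^*) = \bigl[R(x,x) - R(x, x^*)\bigr] + \bigl[R(x, x^*) - R(x^*, x^*)\bigr]. \]
The first bracket isolates the distribution-shift contribution (first argument fixed at $x$, second varying from $x$ to $x^*$), while the second isolates the pure loss-function contribution (first argument varying, second fixed at $x^*$).

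I would bound the first bracket via Kantorovich-Rubinstein duality: since $\ell(\cdot, x)$ is $L_1$-Lipschitz in its data argument by condition~1, the difference of expectations is controlled by $L_1 \mc{W}_1(\mc{D}(x), \mc{D}(x^*)) \le L_1 L_2 |x-x^*|^2$, invoking condition~2. This immediately yields $|R(x,x) - R(x, x^*)| \le L_1 L_2 |x-x^*|^2$.

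For the second bracket, the standard two-sided sandwich for $m$-strongly convex and $L_3$-smooth functions (condition~3) gives, pointwise in $z$,
\[ \tfrac{m}{2}|x-x^*|^2 \le \ell(z, x) - \ell(z, x^*) - \langle \nabla_{x_1}\ell(z, x^*), x-x^* \rangle \le \tfrac{L_3}{2}|x-x^*|^2. \]
Taking expectation with $Z \sim \mc{D}(x^*)$ produces the analogous sandwich for $R(x, x^*) - R(x^*, x^*)$, with the middle correction becoming the linear term $\langle \nabla_{x_1} R(x^*, x^*), x-x^* \rangle$.

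The main obstacle is eliminating this linear cross-term, which would otherwise dominate the target $O(|x-x^*|^2)$ rate as $x \to x^*$. My approach combines two observations: first, the minimizer condition $\nabla PR(x^*) = \nabla_{x_1} R(x^*, x^*) + \nabla_{x_2} R(x^*, x^*) = 0$; second, that the \emph{quadratic} Wasserstein bound forces $\nabla_{x_2} R(x^*, x^*) = 0$, since reapplying Kantorovich-Rubinstein to the map $x_2 \mapsto R(x^*, x_2)$ produces $|R(x^*, x^* + h) - R(x^*, x^*)| \le L_1 L_2 |h|^2$, so every directional derivative at $x^*$ vanishes. Together these give $\nabla_{x_1} R(x^*, x^*) = 0$, the linear term drops out of the second bracket, and summing the two bracket estimates yields the claimed two-sided inequality.
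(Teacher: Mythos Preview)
Your proof follows the paper's route exactly: the same two-bracket decomposition $[R(x,x)-R(x,x^*)]+[R(x,x^*)-R(x^*,x^*)]$, Kantorovich--Rubinstein duality on the first bracket, and the strong-convexity/smoothness sandwich on the second. The only difference is that the paper disposes of the linear correction term in one phrase (``noting that $\nabla PR(x^*)=0$''), whereas you spell out why $\nabla_{x_1}R(x^*,x^*)$ itself vanishes by arguing that the \emph{quadratic} Wasserstein bound forces $\nabla_{x_2}R(x^*,x^*)=0$. That extra step is the right idea and in fact clarifies something the paper leaves implicit; just note that, read literally, hypotheses~(1) and~(2) are asserted only at the single fixed $x$, so your derivative argument tacitly uses them in a neighborhood of $x^*$ (and needs $\ell(\cdot,x^*)$ Lipschitz with \emph{some} constant, though not necessarily $L_1$, which suffices for the vanishing conclusion).
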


\begin{proof}
First, we can break up the performative risk into two parts: $PR(x) - PR(x^*) = R(x,x) - R(x^*,x^*) = [R(x,x) - R(x,x^*)] + [R(x,x^*) - R(x^*,x^*)]$. 
Note that $R(x,x) - R(x,x^*) = \mb{E}_{Z \sim \mc{D}(x)} [\ell(Z,x)] - \mb{E}_{Z \sim \mc{D}(x^*)} [\ell(Z,x)]$. Conditions (1) and (2), along with Kantorovich-Rubenstein duality~\citep{Villani:2003th}, implies this quantity is bounded in absolute value: $|R(x,x) - R(x,x^*)| \le L_1 L_2 |x-x^*|^2$. 
On the other hand, $R(x,x^*) - R(x^*,x^*) = \mb{E}_{Z \sim \mc{D}(x^*)} [\ell(Z,x) - \ell(Z,x^*)]$. By convexity and $L_3$-smoothness, $\ell(z,x) - \ell(z,x^*) \le \langle \nabla_x \ell(z,x^*), x-x^* \rangle + \frac{L_3}{2} |x-x^*|^2$ for any $z$; taking the expectation and noting that $\nabla PR(x^*) = 0$, we have $R(x,x^*) - R(x^*,x^*) \le \frac{L_3}{2} |x-x^*|^2$. In the other direction, using strong convexity and similar arguments, we get: $R(x,x^*) - R(x^*,x^*) \ge \frac{m}{2} |x-x^*|^2$. Combining these results yields the desired results.
\end{proof}
Note that Condition (2) in Proposition~\ref{prop:pr_bnds} is a variation on the typical $\eps$-sensitivity definition. Recall that $\eps$-sensitivity states that for any $x$ and $y$, $\mc{W}_1(\mc{D}(x),\mc{D}(y)) \le \eps|x-y|$~\citep{Perdomo:2020tz}. In contrast, Condition (2) only requires this condition to hold around the point $x^*$, but requires a stricter bound for $x$ close to $x^*$. This bound is also more lax than $\eps$-sensitivity farther away from $x^*$.

Next, we provide sufficient conditions for a bound on the absolute value of the gradient of the performative risk. 

\begin{proposition}[Gradient bounds of the performative risk]
\label{prop:grad_bounds}
Let $x^*$ be a performative risk minimizer and fix any $x$. If:
\begin{enumerate}
    \item $\ell(\cdot,x)$ and $\ell(\cdot, x^*)$ are both $L_1$ Lipschitz continuous
    \item $\ell(z,\cdot)$ is $m$-strongly convex and $L_3$-smooth for every $z$
    \item $\mc{D}(\cdot)$ is $\eps$-sensitive, i.e. $\mc{W}_1(\mc{D}(x),\mc{D}(y)) \le \eps |x-y|$
    \item $\nabla_x \ell(\cdot,x)$ is $L_4$ Lipschitz continuous
\end{enumerate}
Then: $(m- \eps L_4)|x-x^*| - 2 \eps L_1 \le |\nabla PR(x)| \le (L_3 + \eps L_4) |x-x^*| + 2 \eps L_1$.
\end{proposition}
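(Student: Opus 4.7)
The plan is to treat $|\nabla PR(x)|$ by decomposing $\nabla PR$ into the "inner" gradient $\nabla_{x_1} R$ (which behaves like a classical strongly convex, smooth objective) plus the performative perturbation $g(x) = \nabla_{x_2} R(x,x)$ (which measures the distribution-shift effect and can be controlled by the $\epsilon$-sensitivity of $\mc{D}$). Since $x^*$ is a performative risk minimizer, $\nabla PR(x^*) = \nabla_{x_1} R(x^*,x^*) + g(x^*) = 0$. Using this, I would rewrite
\[
\nabla PR(x) = \bigl[\nabla_{x_1} R(x,x) - \nabla_{x_1} R(x^*,x^*)\bigr] + \bigl[g(x) - g(x^*)\bigr].
\]
The upper bound then follows from the triangle inequality $|a+b| \le |a|+|b|$, and the lower bound from the reverse inequality $|a+b| \ge |a| - |b|$, applied to this splitting.

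To control $g(x) - g(x^*)$, I would bound $|g(x)|$ and $|g(x^*)|$ separately. For any fixed $x_1$, the map $x_2 \mapsto R(x_1,x_2)$ is $\epsilon L_1$-Lipschitz: by Kantorovich--Rubinstein duality (as used in the proof of Proposition~\ref{prop:pr_bnds}), $|R(x_1,x_2) - R(x_1,y_2)| \le L_1 \mc{W}_1(\mc{D}(x_2),\mc{D}(y_2)) \le \epsilon L_1 |x_2-y_2|$ whenever $\ell(\cdot,x_1)$ is $L_1$-Lipschitz. Applied with $x_1 = x$ and $x_1 = x^*$ (the two cases needed thanks to condition (1)), this yields $|g(x)| \le \epsilon L_1$ and $|g(x^*)| \le \epsilon L_1$, so $|g(x) - g(x^*)| \le 2\epsilon L_1$. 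To control the first bracket, I would further split it as
\[
\bigl[\nabla_{x_1} R(x,x) - \nabla_{x_1} R(x^*,x)\bigr] + \bigl[\nabla_{x_1} R(x^*,x) - \nabla_{x_1} R(x^*,x^*)\bigr].
\]
For the first piece, $R(\cdot,x)$ inherits $m$-strong convexity and $L_3$-smoothness from $\ell(z,\cdot)$ (since expectation over a fixed distribution preserves these properties), so its norm lies in $[m|x-x^*|, L_3|x-x^*|]$. For the second piece, both expressions are expectations of the same function $\nabla_x \ell(\cdot,x^*)$ under the distributions $\mc{D}(x)$ and $\mc{D}(x^*)$, which, together with $L_4$-Lipschitzness of $\nabla_x \ell(\cdot,x^*)$ and $\epsilon$-sensitivity of $\mc{D}(\cdot)$, yields a bound of $\epsilon L_4 |x-x^*|$ via Kantorovich--Rubinstein (applied componentwise, or in a vector-valued generalization).

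Assembling the pieces, the upper bound gives $|\nabla PR(x)| \le (L_3 + \epsilon L_4)|x-x^*| + 2\epsilon L_1$ directly. For the lower bound, the reverse triangle inequality yields $|\nabla PR(x)| \ge |\nabla_{x_1} R(x,x) - \nabla_{x_1} R(x^*,x^*)| - 2\epsilon L_1 \ge m|x-x^*| - \epsilon L_4 |x-x^*| - 2\epsilon L_1 = (m - \epsilon L_4)|x-x^*| - 2\epsilon L_1$. The main subtlety is the bookkeeping around Kantorovich--Rubinstein for vector-valued integrands; I would handle this by invoking the bound componentwise or by appealing to the same duality argument already used in Proposition~\ref{prop:pr_bnds}, and note explicitly that the $2\epsilon L_1$ additive term is exactly the price paid for the fact that $\nabla_{x_1} R(x^*,x^*)$ need not vanish---only its sum with $g(x^*)$ does.
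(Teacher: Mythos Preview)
Your proposal is correct and follows essentially the same route as the paper: subtract the vanishing $\nabla PR(x^*)$, split into the $\nabla_{x_1}$ and $\nabla_{x_2}$ pieces, bound each $|\nabla_{x_2}R|\le \eps L_1$ via Kantorovich--Rubinstein, and handle the $\nabla_{x_1}$ difference by one ``change of evaluation point'' step (strong convexity/smoothness) plus one ``change of distribution'' step ($\eps$-sensitivity and condition~(4)). The only cosmetic difference is that you insert the intermediate point $\nabla_{x_1}R(x^*,x)$ whereas the paper uses $\nabla_{x_1}R(x,x^*)$; this means your Kantorovich--Rubinstein step integrates $\nabla_x\ell(\cdot,x^*)$ rather than $\nabla_x\ell(\cdot,x)$, so strictly speaking you are invoking condition~(4) at $x^*$ rather than at the fixed $x$ as literally stated---a harmless discrepancy, but worth aligning if you want to match the hypotheses exactly.
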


\begin{proof}
Similar to the previous proposition, we break apart this gradient. Note that $\nabla PR(x^*) = 0$, so:
$|\nabla PR(x)| = |\nabla PR(x) - \nabla PR(x^*)| =
|\nabla_{x_1} R(x,x) - \nabla_{x_1} R(x^*,x^*) +
\nabla_{x_2} R(x,x) - \nabla_{x_2} R(x^*,x^*)|$. For the $\nabla_{x_1}$ terms, we have: $m|x-x^*| \le |\nabla_{x_1} R(x,x^*) - \nabla_{x_1}R(x^*,x^*)| \le L_3|x-x^*|$ by standard convexity arguments, and $|\nabla_{x_1} R(x,x) - \nabla_{x_1}R(x,x^*)| \le \eps L_4 |x-x^*|$ by the same Kantorovich-Rubenstein duality argument as the previous proposition. For the $\nabla_{x_2}$ terms, note that the mapping $x_2 \mapsto R(x,x_2)$ is $\eps L_1$ Lipschitz continuous. Thus, $|\nabla_{x_2} R(x,x)| \le \eps L_1$ and similarly $|\nabla_{x_2} R(x^*,x^*)|$. Combining these inequalities yields the desired result.
\end{proof}

Depending on the situation, we may be able to directly verify Assumption~\ref{ass:exist_V}, although, for more complex settings, this is likely to be very difficult. Propositions~\ref{prop:pr_bnds} and~\ref{prop:grad_bounds} provide a set of sufficient conditions for this assumption to hold, but checking the conditions on the decision-dependent distribution shift $\mc{D}(\cdot)$ may be difficult in practice as well. This is one limitation of this current work, and we believe it is an interesting future research direction to identify conditions which are easy to verify, even in settings with limited information about the distribution shift itself.


\section{ANALYSIS OF REPEATED RISK MINIMIZING FLOW}
\label{sec:analysis_RGD}

In the previous section, we consider the PRM gradient flow and showed that the trajectories converge to local performative risk minimizers in very general settings. In this section, we will consider the RGD flow, defined by Equation~\eqref{eq:RGD_flow}. 
The RGD flow is not necessarily a gradient flow, and generally will not inherit the nice properties we saw in Section~\ref{sec:analysis_prm}.

The following theorem provides conditions on the transient response and steady-state behavior of the RGD flow. Prior to $T$, the trajectories converge exponentially quickly. After $T$, we have an ultimate bound that holds.

\begin{theorem}[Ultimate bounds for RGD flow]
\label{th:perturb1}

Fix any isolated performative risk minimizer $x^*$ and suppose the conditions of Assumption~\ref{ass:exist_V} hold. Let $(c_i)_{i=1}^4$ and $\delta$ denote the constants from Assumption~\ref{ass:exist_V} and $r > 0$ denote the radius where the inequalities are valid.

Suppose that there exists positive constants $\eps < c_3^2/c_4$ such that the following holds on $U = \{ x : |x - x^*| \le r \}$:
\begin{equation}
\label{eq:ass_V1}
|\nabla_{x_2}R(x,x)| \le \eps |x-x^*| + \delta
\end{equation}
Additionally, suppose the initial condition satisfies:
\[
|x_0 - x^*| \le \sqrt{\frac{c_1}{c_2}}r
\]
Take any $\theta \in (0,1)$ such that:
\[
\delta \le \sqrt{\frac{c_1}{c_2}} 
\frac{(1 - \theta) r (c_3^2/c_4 - \eps)}{c_4 + 2c_3 + \epsilon}
\]
Then, there exists a $T \ge 0$ such that:
\begin{itemize}
\item For all $t \le T$:
\[
\begin{aligned}
&|\varphi_{\fpert}(t;x_0) - x^*| \le \\
&\qquad\sqrt{\frac{c_2}{c_1}} \exp(-t\theta (c_3^2 - c_4\eps)/2c_2) |x_0 - x^*|
\end{aligned}
\]
\item For all $t \ge T$:
\[
\begin{aligned}
    &|\varphi_{\fpert}(t;x_0) - x^*| 
    \le \\
    &\qquad\sqrt{\frac{c_2}{c_1}} 
    \max
    \left\{
    \frac{\delta (c_4 + 2c_3 + \epsilon)}{ (1-\theta) (c_3^2 - c_4 \eps)},
    \frac{\delta}{c_3}
    \right\}.
\end{aligned}
\]
\end{itemize}

\end{theorem}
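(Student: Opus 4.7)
The natural strategy is a local Lyapunov analysis using $V(x) := PR(x) - PR(x^*)$, treating the RGD flow as a perturbation of the PRM gradient flow. Assumption~\ref{ass:exist_V} gives quadratic upper and lower bounds on $V$ and linear bounds on $|\nabla V| = |\nabla PR|$ throughout the ball $U = \{x : |x-x^*| \le r\}$, so $V$ plays the role of a local quadratic Lyapunov function. The core computation is $\dot V$ along $\fpert$. Writing $\fpert(x) = \fnom(x) + g(x)$ with $g(x) = \nabla_{x_2}R(x,x)$, one obtains $\dot V = -|\nabla PR(x)|^2 + \nabla PR(x)^\T g(x)$. Combining the gradient bounds of Assumption~\ref{ass:exist_V}, Cauchy--Schwarz, and the hypothesis $|g(x)| \le \eps|x-x^*| + \delta$ yields the key inequality
\[
\dot V \le -(c_3^2 - c_4\eps)\,|x-x^*|^2 + c_4\delta\,|x-x^*|,
\]
which has the classical negative-quadratic-plus-linear-perturbation form; the assumption $\eps < c_3^2/c_4$ makes the coefficient of the quadratic term strictly negative.

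For the transient phase I would split this estimate via the parameter $\theta$: outside the threshold radius $\tau := c_4\delta/[(1-\theta)(c_3^2-c_4\eps)]$, the linear term is absorbed into a $(1-\theta)$ share of the dissipation, leaving $\dot V \le -\theta(c_3^2-c_4\eps)|x-x^*|^2 \le -\theta(c_3^2-c_4\eps)V(x)/c_2$. Gronwall's inequality then produces exponential decay of $V$ at rate $\theta(c_3^2-c_4\eps)/c_2$; translating back to $|x-x^*|$ via $V(x_0) \le c_2|x_0-x^*|^2$ and $|x(t)-x^*|^2 \le V(x(t))/c_1$ introduces the factor $\sqrt{c_2/c_1}$ and halves the rate in the exponent, recovering the transient bound stated. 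I would define $T$ to be the first time the trajectory enters the closed ball of radius $\tau$ around $x^*$; this $T$ is finite because the Lyapunov function strictly decreases outside this ball, and the transient bound holds on $[0,T]$ by construction.

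The main obstacle, and the reason the hypotheses on the initial condition and $\delta$ appear, is verifying that the trajectory never leaves the validity region $U$ and that it remains inside an appropriate ultimate sublevel set for $t \ge T$. The bound $|x_0-x^*| \le \sqrt{c_1/c_2}\,r$ was selected so that $V(x_0) \le c_1 r^2$, placing $x_0$ in the sublevel set $\{V \le c_1 r^2\} \subseteq U$ (using the lower quadratic bound on $V$). The hypothesis on $\delta$ is exactly what makes this sublevel set positively invariant: on its boundary the upper bound $V \le c_2|x-x^*|^2$ forces $|x-x^*| \ge r\sqrt{c_1/c_2} \ge \tau$, so the core inequality gives $\dot V \le 0$ there. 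For $t \ge T$ one applies the same invariance argument to the smaller sublevel set $\{V \le c_2\tau^2\}$, whose boundary has $|x-x^*| \ge \tau$ (again by the $V$-upper bound) and is contained in $U$ by the $\delta$-hypothesis; on this invariant set the $V$-lower bound yields $|x-x^*| \le \sqrt{V/c_1} \le \sqrt{c_2/c_1}\,\tau$, which is precisely the ultimate bound.

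Everything beyond the two sublevel-set invariance arguments is routine manipulation of the Lyapunov inequality; the delicate point is choosing the right pair of sublevel sets so that the initial condition lies inside the outer one, the inner one captures the steady-state behaviour, and both lie strictly inside $U$ so that Assumption~\ref{ass:exist_V} remains applicable throughout.
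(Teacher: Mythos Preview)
Your proposal is correct and matches the paper's proof essentially line-for-line: the same Lyapunov function $V=PR-PR(x^*)$, the same derivation of $\dot V \le -(c_3^2-c_4\eps)|x-x^*|^2 + c_4\delta|x-x^*|$, the same $\theta$-splitting with threshold $\mu=c_4\delta/[(1-\theta)(c_3^2-c_4\eps)]$, the same definition of $T$ as the first hitting time of the $\mu$-ball, Gronwall/comparison for the transient, and the sublevel set $\{V\le c_2\mu^2\}$ for the ultimate bound. The only cosmetic difference is that the paper argues containment in $U$ for $t<T$ directly from the exponential bound (via the initial-condition hypothesis) rather than via invariance of an outer sublevel set $\{V\le c_1 r^2\}$, but this is the same standard perturbation-analysis machinery.
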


\begin{proof}
See Appendix~\ref{app:proof_perturb1}.
\end{proof}

Note that, in the special case where $\delta = \lambda = 0$, we have that the RGD flow converges exponentially quickly to $x^*$ locally. Similarly, in the special case where Assumption~\ref{ass:exist_V} holds everywhere (i.e. $r = \infty$), then there is only one minimizer $x^*$, and all initial conditions converge to a neighborhood of $x^*$ exponentially fast. In addition, if every condition in Assumption 1 holds with equality, the bounds in Theorem 1 also hold with equality. Please see Appendix A for an example where this occurs.

Additionally, note that locally performatively stable points are equilibria of the RGD flow. This result provides constraints on where performatively stable points can be located. Consider the special case where Assumption~\ref{ass:exist_V} holds globally (i.e. $r = \infty$) and, consequently, there exists only one minimizer $x^*$. In this special case, Theorem~\ref{th:perturb1} shows that all performatively stable points must be close to $x^*$. The phenomena that, under certain conditions, performatively stable points are near performative risk minimizers, was first noted in~\citet{Perdomo:2020tz}. Our results here provide another set of conditions under which the same result holds.

Furthermore, in the presence of Propisition 3 and 4, Theorem 1 can be restated in terms of $\ell(\cdot)$ and $\cD(\cdot)$ as follows:

\begin{corollary}
    Let $x^*$ be a performative risk minimizer and fix any $x$. If:
    \begin{enumerate}
    \item $\ell(\cdot,x)$ and $\ell(\cdot, x^*)$ are both $L_1$ Lipschitz continuous
    \item $\ell(z,\cdot)$ is $m$-strongly convex and $L_3$-smooth for every $z$
    \item $\mc{D}(\cdot)$ is $\eps$-sensitive, i.e. $\mc{W}_1(\mc{D}(x),\mc{D}(y)) \le \eps |x-y|$
    \item $\mc{W}_1(\mc{D}(x),\mc{D}(x^*)) \le L_2 |x-x^*|^2$
    \item $\nabla_x \ell(\cdot,x)$ is $L_4$ Lipschitz continuous
\end{enumerate}
Suppose the initial condition satisfies:
\[
|x_0 - x^*| \le \sqrt{\frac{m - 2L_1L_2}{L_3 + 2L_1L_2}}r
\]
Then, there exists a $T \ge 0$ such that:
\begin{itemize}
\item For all $t \le T$:
\[
|\varphi_{\fpert}(t;x_0) - x^*| \le 
\]
\[
\sqrt{\frac{L_3 + 2L_1L_2}{m - 2L_1L_2}} \exp\left(\frac{-t\theta (m-\epsilon L_4)^2}{L_3 + 2L_1L_2}\right) |x_0 - x^*|
\]
\item For all $t \ge T$:
\[
\begin{aligned}
    &|\varphi_{\fpert}(t;x_0) - x^*| \le 
\sqrt{\frac{L_3 + 2L_1L_2}{m - 2L_1L_2}} \\
&\qquad\cdot\max
\left\{
    \frac{4\epsilon L_1 (L_3 + 2m - \epsilon L_4)}{ (m-\epsilon L_4)^2},
    \frac{2 \epsilon L_1}{m - \epsilon L_4}
\right\}.
\end{aligned}
\]
\end{itemize}
\end{corollary}
\begin{proof}
This follows immediately by combining Theorem~\ref{th:perturb1} with Propositions~\ref{prop:pr_bnds} and~\ref{prop:grad_bounds}.
\end{proof}


\subsection{Performative alignment}

From the previous analysis, we also identify conditions on the directions of the performative perturbations that are sufficient to show the convergence of Equation~\eqref{eq:RGD_flow}, the RGD flow, to performative risk minimizers.

\begin{theorem}[Performative alignment]
\label{th:perf_align}
Suppose $x^*$ is a isolated local performative risk minimizer and the following holds for all $x$ in a neighborhood of $x^*$:
\begin{equation}
\label{eq:perf_align}
|\nabla_{x_2}R(x,x)|^2 \le \langle -\nabla_{x_1}R(x,x), \nabla_{x_2}R(x,x) \rangle
\end{equation}
Then $x^*$ is a locally asymptotically stable equilibrium point of the RGD flow, given by Equation~\eqref{eq:RGD_flow}. 
Note that this does \textbf{not} require Assumption~\ref{ass:exist_V}.
\end{theorem}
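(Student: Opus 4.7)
The plan is to use $V(x) = PR(x) - PR(x^*)$ as a Lyapunov function for the RGD flow, exactly as in the proof of Proposition 2, but now accounting for the performative perturbation. Since Assumption~\ref{ass:exist_V} is explicitly not invoked, I cannot rely on the quadratic bounds; instead I will extract positive-definiteness of $V$ purely from the hypothesis that $x^*$ is an isolated local performative risk minimizer. Following the convention used in the proof of Proposition 2, I choose a neighborhood $U \ni x^*$ on which (i) $\nabla PR(x) \ne 0$ for $x \ne x^*$, (ii) $V(x) > 0$ for $x \ne x^*$, and (iii) the performative alignment inequality~\eqref{eq:perf_align} holds. After shrinking $U$ further if necessary, I pick a connected sublevel component $\{x : V(x) \le \eps\}$ contained in $U$ to serve as a positively invariant region for the standard Lyapunov argument.

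The core computation is to evaluate $\mc{L}_{\fpert}V(x) = \langle \nabla PR(x), -\nabla_{x_1}R(x,x)\rangle$ and exploit the identity $-\nabla_{x_1}R(x,x) = -\nabla PR(x) + \nabla_{x_2}R(x,x)$, which follows from the decomposition $\nabla PR = \nabla_{x_1}R + \nabla_{x_2}R$ used throughout Section~\ref{sec:model}. Substituting yields
\begin{equation*}
\mc{L}_{\fpert}V(x) = -|\nabla PR(x)|^2 + \langle \nabla PR(x), \nabla_{x_2}R(x,x)\rangle.
\end{equation*}

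The second step is to translate the performative alignment hypothesis into a statement about $\langle \nabla PR, \nabla_{x_2}R\rangle$. Expanding the right-hand side of~\eqref{eq:perf_align} using the same identity gives $\langle -\nabla_{x_1}R, \nabla_{x_2}R\rangle = |\nabla_{x_2}R|^2 - \langle \nabla PR, \nabla_{x_2}R\rangle$, so~\eqref{eq:perf_align} is equivalent to the geometric inequality $\langle \nabla PR(x), \nabla_{x_2}R(x,x)\rangle \le 0$. Intuitively, the performative perturbation never has a positive component along the ascent direction of $PR$, which is exactly what it means for the perturbation not to work against convergence. Combining this with the Lie derivative computation gives $\mc{L}_{\fpert}V(x) \le -|\nabla PR(x)|^2$, strictly negative on $U \setminus \{x^*\}$ by construction of $U$.

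At this point local asymptotic stability follows from the standard Lyapunov argument (e.g., the same citation used in Proposition 2): $V$ is positive definite on $U$, vanishes only at $x^*$, and has strictly negative Lie derivative along $\fpert$ off of $x^*$, while the chosen sublevel component is positively invariant under $\fpert$. The only nontrivial bookkeeping is ensuring $U$ really can be chosen so that all three conditions hold simultaneously and that a sublevel component of $V$ within $U$ exists; this is the main obstacle in what is otherwise a direct Lyapunov argument, and it is handled by the same continuity/sublevel-set considerations that underlie Proposition 2.
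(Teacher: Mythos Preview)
Your proposal is correct and follows essentially the same approach as the paper: take $V(x)=PR(x)-PR(x^*)$ as a Lyapunov function, verify that the performative alignment condition forces $\mc{L}_{\fpert}V<0$ on a punctured neighborhood of $x^*$, and invoke the standard Lyapunov argument exactly as in Proposition~2. Your explicit reformulation of~\eqref{eq:perf_align} as $\langle \nabla PR(x),\nabla_{x_2}R(x,x)\rangle\le 0$ and the resulting bound $\mc{L}_{\fpert}V\le -|\nabla PR|^2$ are more detailed than the paper's one-line sketch, but the underlying argument is identical.
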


\begin{proof}
Let $V(x) = PR(x) - PR(x^*)$. 
Since $x^*$ is a locally asymptotically equilibria of the PRM flow, we have: $V(x^*) = 0$, $V(x) > 0$ for $x \neq 0$, and $\mc{L}_{\fnom} V(x) < 0$ for $x \neq 0$. The performative alignment condition ensures that $\mc{L}_{\fnom + g} V(x) < 0$ as well, and the desired result follows.
\end{proof}

We refer to Equation~\eqref{eq:perf_align} as the \textbf{performative alignment} condition. This condition states that the performative perturbation never increases the performative risk, and the convergence of performative risk minimization is sufficient to guarantee convergence of repeated risk minimization. In other words, the perturbation is pointing in the correct direction to ensure that $PR(\cdot) - PR(x^*)$ can still act as a Lyapunov function.

Another perspective on performative alignment is to consider the performative risk as a bilinear form whose arguments are parameterized by $x$. In particular, consider the decoupled performative risk $R(\cdot,\cdot)$. Let $\ell_{x} := \ell(\cdot, x)$ and let $\mu_x$ denote the probability distribution associated with $\mc{D}(x)$. Then, we can write $R(x_1,x_2) = \langle \mu_{x_2}, \ell_{x_1} \rangle$. From this perspective, $R(\cdot,\cdot)$ is a bilinear form in $\ell_x$ and $\mu_x$. As such, the performative alignment condition becomes a condition on the way in which $\ell$ and $\mu$ are \textit{parameterized} by $x$.

In Appendix~\ref{sec:perf_align_ex}, we apply Theorem~\ref{th:perf_align} to the example outlined in Section~\ref{sec:simple_ex}. It provides insight into one of the ways to use Theorem~\ref{th:perf_align}: when we fix a loss $\ell(\cdot)$, we can view the performative alignment condition as specifying a class of decision-dependent distribution shifts which do not hamper the convergence of RGD to performative risk minimizers.




\section{CLOSING REMARKS}
\label{sec:conclusion}

In this paper, we analyzed the problem of performative prediction in settings where multiple isolated equilibria may be of interest. We analyzed the gradient flow of performative risk minimization, and identified regions of attraction for various equilibria. We viewed repeated gradient descent flow as a perturbation of the PRM gradient flow. In particular, we used a Lyapunov function for the PRM gradient flow to analyze the trajectories of the RGD flow. We found conditions on which RGD flow will converge to the local PRM minimizers, and conditions on which they will converge to a neighborhood of PRM minimizers. 

These results provide a method to analyze the regions of attraction for various equilibria under repeated risk minimization. In real-world settings with decision-dependent distributions, we expect many situations where the initialization may have a significant outcome on the trajectories and final outcomes. 


\subsubsection*{Acknowledgements}
Lillian J. Ratliff is supported by NSF CAREER Award No.1844729.

\bibliography{main}

\appendix
\onecolumn

\section{PROOF OF THEOREM~\ref{th:perturb1}}
\label{app:proof_perturb1}

Let $V(x) = R(x,x) - R(x^*,x^*)$. Note that $V(x) \ge 0$ on $U = \{ x : |x - x^*| \le r \}$ and $V(x) = 0$ if and only if $x = x^*$. Furthermore, note that $\frac{\partial V}{\partial x}(x) = [\nabla_{x_1}R(x,x) + \nabla_{x_2}R(x,x)]^\T$.

Consider the function $t \mapsto V(\varphi_{\fpert}(t;x_0))$ and its time derivative. Also, let $\xpert(t) = \varphi_{\fpert}(t;x_0)$. 
When $|\xpert-x^*| \ge \delta/c_3$, taking the derivative along trajectories of the repeated risk minimization flow and using Equations~\eqref{eq:v_ineq2} and~\eqref{eq:ass_V1}:
\[
\begin{aligned}
    \mc{L}_{\fnom + g}V 
    &=
    \frac{\partial V}{\partial x} (\fnom(x) + g) 
    =
    -|\nabla_{x_1}R + \nabla_{x_2}R|^2 + \langle \nabla_{x_1}R + \nabla_{x_2}R, \nabla_{x_2}R \rangle \\
    &\le
    -\left( c_3 |\xpert-x^*|-\delta \right)^2
    +
    \left( c_4 |\xpert-x^*|+\delta \right) |\nabla_{x_2}R| \\
    &\le
    -\left( c_3 |\xpert-x^*|-\delta \right)^2
    +
    \left( c_4 |\xpert-x^*|+\delta \right) (\epsilon|\xpert-x^*|+\delta)\\
    &=
    -\left(c_4\epsilon-c_3^2\right)|\xpert-x^*|^2
    +\left(c_4\delta + 2c_3\delta+\delta\epsilon\right)|\xpert-x^*|
\end{aligned}
\]
These inequalities are valid so long as $\xpert(t)$ stays within $U$, which we will ensure later in the proof. Note that $\eps$ is sufficiently small (by assumption) to ensure that $-c_3^2 + c_4 \eps < 0$.

Let $\alpha := c_3^2 - c_4 \eps > 0$. Take any $\theta \in (0,1)$ and note that:
\[
\mc{L}_{\fnom + g}V(\xpert) \le - \theta \alpha | \xpert - x^* |^2 - (1 - \theta) \alpha | \xpert - x^* |^2 +\left(c_4\delta + 2c_3\delta+\delta\epsilon\right)|\xpert-x^*|
\]
Now, let 
\[
    \mu(\theta)
    :=
    \max
    \left\{
    \frac
    {
        \left(c_4\delta + 2c_3\delta+\delta\epsilon\right)
    }
    {\alpha(1 - \theta)},
    \frac{\delta}{c_3}
    \right\}.
\]
If $|\xpert - x^*| \ge \mu(\theta)$, then:
\[
- \theta \alpha | \xpert - x^* |^2 - (1 - \theta) \alpha | \xpert - x^* |^2 +\left(c_4\delta + 2c_3\delta+\delta\epsilon\right)|\xpert-x^*| \le 0,
\]
and thus
\[
\mc{L}_{\fnom + g}V(\xpert) \le - \theta \alpha | \xpert - x^* |^2.
\]
Trajectories of Equation~\eqref{eq:RGD_flow} has two stages: a transient due to its initial condition, and then an ultimate bound due to the perturbation. Let $T(\theta) = \inf~\{ t \ge 0 : |\xpert(t) - x^*| \le \mu(\theta) \}$. Prior to $T(\theta)$, we have:
\[
\frac{d}{dt} V(\xpert(t)) \le - \theta \alpha |\xpert(t) - x^*|^2 \le - \frac{\theta \alpha}{c_2} V(\xpert(t))
\]
The latter follows from Equation~\eqref{eq:v_ineq1}. 
By the comparison principle (see, e.g.~\citep[Lemma 3.4]{Khalil:2001wj}), we have $V(\xpert(t)) \le \exp(-t\theta \alpha / c_2) V(x_0)$. Again using Equation~\eqref{eq:v_ineq1}, this yields the following inequality, valid for all $t \le T(\theta)$:
\[
|\xpert(t) - x^*| \le \sqrt{\frac{c_2}{c_1}} \exp(-t\theta \alpha/2c_2) |x_0 - x^*|
\]
Note that this inequality also provides an upper bound on $T(\theta)$. Additionally, note that this implies the bound $|\xpert(t) - x^*| \le r$, by our assumption on the initial condition. Prior to $T(\theta)$, our trajectory stays in $U$, where our inequalities are valid.

At time $T(\theta)$, we have $|\xpert(t) - x^*| \le \mu(\theta)$. Note that this inequality implies $V(\xpert(t)) \le c_2 \mu^2(\theta)$. Since $\mc{L}_{\fnom + g}V < 0$ on the boundary of $\Omega(\theta) := \{ x : V(x) \le c_2 \mu^2(\theta) \}$, we have that $\Omega(\theta)$ is a positively invariant set. So, for all $t \ge T(\theta)$, we have $\xpert(t) \in \Omega(\theta)$. Using Equation~\eqref{eq:v_ineq1}, we have the following for all $t \ge T(\theta)$:
\[
|\xpert(t) - x^*| \le 
\sqrt{\frac{c_2}{c_1}} \mu(\theta) 
\]
The condition on $\theta$ ensures that this quantity is bounded by $r$, and the trajectory stays in $U$ for $t \ge T(\theta)$. 
This proves our desired result.

Additionally, we can show that this bound is tight by considering the following example. 
Suppose $\cD(x_2)$ is the point mass distribution (i.e. $p(z) = \delta(z - x_2)$) and $l(z, x_1) = 1/2 |z|^2 + 1/2|x_1|^2$. Then the performative risk is given by $R(x_1, x_2) = 1/2 |x_1|^2 + 1/2|x_2|^2$. It follows that $x^* 0$ is the performative risk minimizer. Following the arguments in Appendix A, one would find that the dynamics of $V(x) = R(x, x) - R(x^*, x^*)$ follows $\frac{d}{dt} V(x(t)) = -2 |x(t) - x^*|^2 = -2 V(x(t))$, which yields 
    $|x(t) - x^*| = \exp(-2t)|x_0 - x^*|$.

\section{NUMERICAL EXAMPLES}
\label{sec:num_results}

\begin{figure*}[t!]
  \centering
\includegraphics[width=0.325\textwidth]{./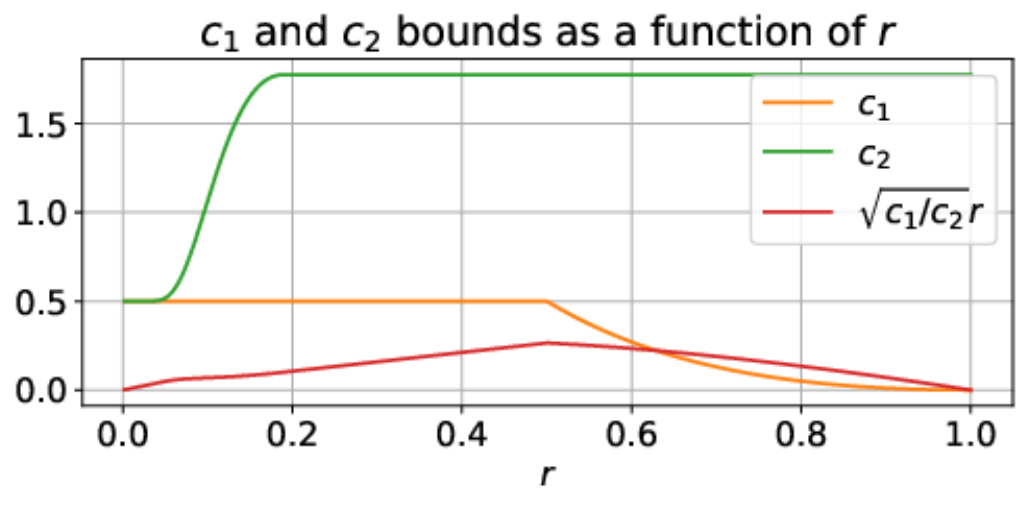}
\includegraphics[width=0.315\textwidth]{./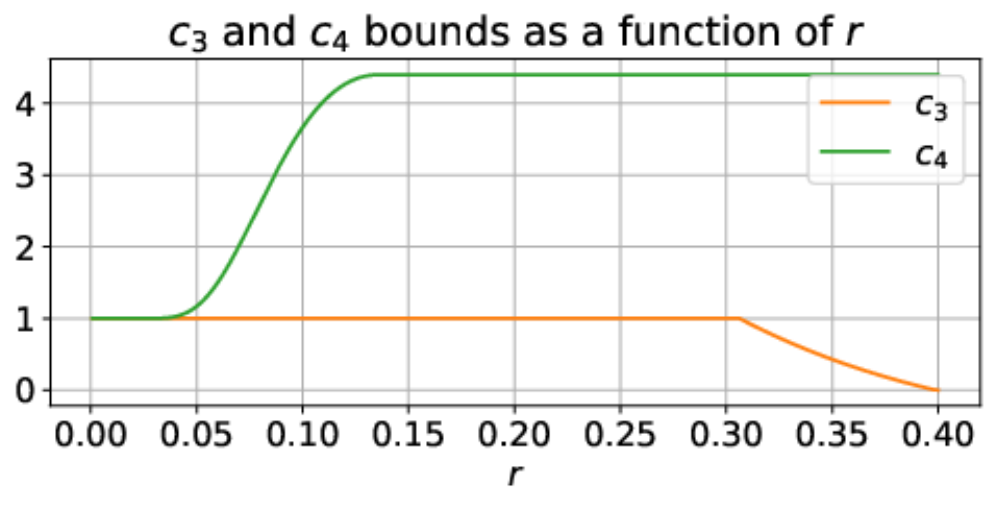}
\includegraphics[width=0.26\textwidth]{./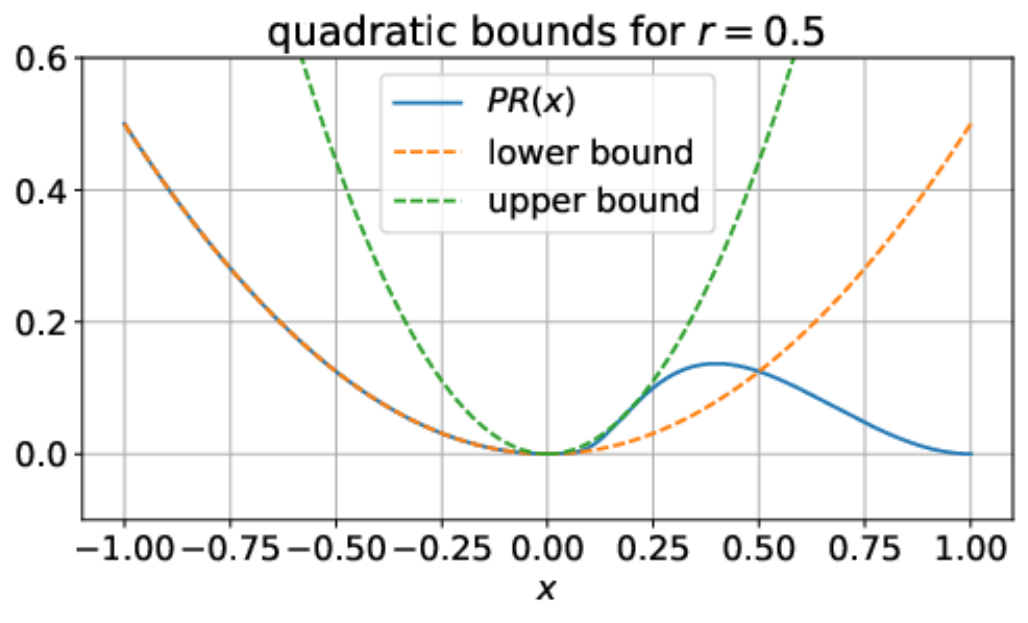}
  \caption{We verify that the performative risk bounds in Assumption~\ref{ass:exist_V} are satisfied in the example discussed in Section~\ref{sec:example}. (a) As a function of $r$ (the radius of the domain where the inequalities hold), we show the tightest constants $c_1$ and $c_2$ for the bound. We also plot $\sqrt{c_1/c_2}r$, which is the radius of a neighborhood of $x=0$ to which Theorem~\ref{th:perturb1} can be applied. (b) As a function of $r$, we show the tightest constants for $c_3$ and $c_4$. (c) Choosing the $c_1$ and $c_2$ constants for $r = 0.5$, we visualize how the quadratic bounds hold for the performative risk locally.}
  \label{fig:simp_ex_demo}
\end{figure*}
In this section, we revisit the models introduced in Section~\ref{sec:example}. We demonstrate how the results of Sections~\ref{sec:analysis_prm} and~\ref{sec:analysis_RGD} can be applied. First, we show that the example satisfies Assumption~\ref{ass:exist_V} and we calculate its corresponding constants. Second, we apply Theorem~\ref{th:perturb1} and show the theoretical convergence rates match simulated trajectories. 
Finally, we also apply Theorem~\ref{th:perf_align} to the example from Section~\ref{sec:simple_ex} and characterize the class of distribution shifts satisfy the performative alignment condition.

\subsection{Checking the curvature of the performative risk and region of convergence}

Recall the example from Section~\ref{sec:simple_ex}, where $x$ was a scalar, the loss function was the squared error, and the decision-dependent distribution was a Bernoulli random variable whose distribution was determined by $p(\cdot)$. In this section, we consider the specific decision-dependent distribution shift $p = \varphi$, which is defined in Equation~\eqref{eq:varphi_def}.

When we consider this example, we can see that the bounds on Assumption~\ref{ass:exist_V} cannot hold globally, which matches our previous observation that there are multiple isolated performative risk minimizers. However, these bounds may hold locally: we can view the constants $(c_i)_{i=1}^4$ from Assumption~\ref{ass:exist_V} as a function of the size of the domain $r$.

For concreteness, let us focus on the equilibrium point $x = 0$. Recall that Assumption~\ref{ass:exist_V} must hold locally, on the domain $\{ x : |x-x^*| \le r\}$. As we increase $r$, the constants will worsen; we visualize this in Figure~\ref{fig:simp_ex_demo}(a)--(b). Note that these bounds only have to hold locally around the equilibria, as visualized in Figure~\ref{fig:simp_ex_demo}(c). Furthermore, the gradient bounds in Assumption~\ref{ass:exist_V} cannot hold beyond $r > 0.40$, since $\nabla PR(x) = 0$ at that point.

Recall that the convergence results of Theorem~\ref{th:perturb1} can only apply to all initial conditions satisfying $|x_0 - x^*| < \sqrt{c_1/c_2}r$; we visualize this as well in Figure~\ref{fig:simp_ex_demo}(a). 
On the set $(0,0.40]$, we can see the quantity $\sqrt{c_1/c_2}r$ is the largest at $r = 0.4$, with constants $c_1 = 0.50$ and $c_2 = 1.78$. 
Thus, around the equilibrium $x = 0$, the theorem can be applied to all points in the set $\{ x : |x| \le 0.21 \}$, with $\delta = 0$. Thus, our theorem shows that all points in this neighborhood of $x = 0$ will converge. This under-approximates the true region of attraction, which we numerically saw to be $\{ x : x < 0.23 \}$.

\subsection{Performative alignment with squared error and Bernoulli distributions}
\label{sec:perf_align_ex}

We again consider the example from Section~\ref{sec:simple_ex}. However, in this section, we consider a general decision-dependent distribution shift $p(\cdot)$. 
We suppose that $p(0) = 0$ and $p(1) = 1$, so we have two performative risk minimizers as in our previous example. We have $\nabla_{x_1}R(x,x) = x - p(x)$ and $\nabla_{x_2}R(x,x) = (1/2 - x) p'(x)$. 
The performative alignment condition becomes:
\begin{equation}
    \label{eq:perf_align_ex}
    |1/2 - x|^2 |p'(x)|^2 \le (p(x)-x)(1/2 - x)p'(x)
\end{equation}
Theorem~\ref{th:perf_align} states that if this condition holds for all $x \in (0,c)$, then any initial conditions $x_0 \in (0,c)$ will converge to $x = 0$. Similarly, if this condition holds for all $x \in (c,1)$, then all initial conditions in $(c,1)$ will converge to $x = 1$. Theorem~\ref{th:perf_align} also implies that this condition cannot be satisfied for all $x \in (0,1)$, as then these initial conditions would converge to \textit{both} $x = 0$ and $x = 1$.

If we suppose that $p(\cdot)$ is monotonic on $(0,1)$, i.e. $p'(x) \ge 0$, we can also interpret the performative alignment condition as follows. For $x \in (1/2,1)$, the performative alignment condition becomes $p(x) - x \ge (1/2 - x)p'(x)$. In this regime, $(1/2 - x)p'(x) \le 0$. In this setting, if $p(x) - x$ is too negative, the RGD flow will push $x$ away from the nearby minimizer $x = 1$. Similarly, for $x \in (0,1/2)$, the condition becomes $p(x) - x \le (1/2 - x)p'(x)$. In this regime, $(1/2 - x)p'(x) \ge 0$, and the condition states that $p(x) - x$ cannot be too large, or the RGD flow will push $x$ away from the minimizer $x = 0$.

In this section, we used Theorem~\ref{th:perf_align} to identify conditions on the decision-dependent distribution shift $p(\cdot)$ which ensure that the performative risk does not increase even when the dynamics follow repeated gradient descent.
For this example, the condition is that $p$ satisfies Equation~\eqref{eq:perf_align_ex} for all $x \in (0,c)$. 
More generally, the performative alignment condition allow us to specify a class of distribution shifts which behave well with respect to performative risk minimization.


\end{document}